\documentclass[11pt]{article}
\usepackage[utf8]{inputenc}
\usepackage[margin=1in]{geometry}
\pdfoutput=1
\usepackage{ifthen}
\newboolean{usenatbib}
 
\setboolean{usenatbib}{false}
 
\usepackage[style=alphabetic,maxalphanames=4,minalphanames=3,maxcitenames=2,maxbibnames=99,giveninits=false,doi=false,url=false,natbib]{biblatex}
\DeclareDelimFormat[bib]{nametitledelim}{\addspace}
 
\usepackage{csquotes}
\usepackage{bm}
\usepackage{fullpage}
\usepackage[english]{babel}
\usepackage[T1]{fontenc}
\usepackage{enumitem}
\setlist{nosep}
\usepackage{microtype}
\usepackage{booktabs}
\usepackage{hhline}
\usepackage{makecell}
\usepackage{etoolbox}
\apptocmd{\sloppy}{\hbadness 10000\relax}{}{}
 
\usepackage{amsmath,amsthm,amssymb,mathtools,thmtools}
\usepackage{graphicx}
\usepackage{color}
\usepackage[dvipsnames]{xcolor}
\usepackage[textsize=scriptsize,disable]{todonotes}
\usepackage[colorlinks=true, allcolors=magneta]{hyperref}
\usepackage{algorithm, algorithmic}
\usepackage{xfrac}
\usepackage[normalem]{ulem}

\hypersetup{colorlinks,
            breaklinks=true,
            linkcolor=purple,
            citecolor=purple,
            urlcolor=magenta,
            linktocpage,
            plainpages=false,
            }
 
\usepackage{macros}
 
\usepackage{comment}
\usepackage{tabularx}
\usepackage{subfig}
\usepackage{array}

\title{Online Learning with Improving Agents: Multiclass, Budgeted Agents and Bandit Learners}
\usepackage{xcolor}
\usepackage{tikz}
\usetikzlibrary{trees}

\author{%
 Sajad Ashkezari\thanks{University of Waterloo, \texttt{sajad.ashkezari@uwaterloo.ca}.}\quad\quad\quad\quad Shai Ben-David \thanks{University of Waterloo, \texttt{shai@uwaterloo.ca}.}
}


\begin{document}

\maketitle

\begin{abstract}%
  We investigate the recently introduced model of learning with improvements, where agents are allowed to make small changes to their feature values to be warranted a  more desirable label. We extensively extend previously published results by providing combinatorial dimensions that characterize online learnability in this model, by analyzing the multiclass setup, learnability in a bandit feedback setup, modeling agents' cost for making improvements and more.
\end{abstract}


\section{Introduction}
With the proliferation of machine learning based decision making tools and their application to societal and personal domains, there has been growing interest in understanding the implications of the use of such tools on the behavior of the individuals influenced by their decisions. 

 One aspect of such implications is addressed under the title of \emph{strategic classification}. It addresses the possible manipulation of user's data aimed to achieve desirable classification by the decision algorithm. The research along this line concerns setups where the feature vectors available to the learner differ from the true instance feature vectors in a way that increases the likelihood of some desirable outcomes. For example, users sign up for a gym club, to appear to be healthier to an algorithm assigning life insurance rates. Strategic classification learning aims to develop learning algorithms that mitigate the effects of such data manipulations (\cite{hardt2016strategic}, \cite{ahmadi2023fundamental} \cite{ahmadi2024strategic}).

Another related line of research aims to design algorithms that incentivize users to change their behavior (and consequently their true attributes) in a direction that improves their label (\cite{MMH20} and more). In this setup, the designer of the algorithm wishes to incentivize the individuals to exercise more. 

This paper follows a recent line of work title \emph{Learning with improvements} (\cite{attias2025pac}, \cite{sharma2025conservative}), where the learner assumes that the affected agents do change their true attributes towards achieving a desired label, but the focus is on accurate prediction of the resulting classification (rather than incentivizing behavioral change).

We extend the earlier published work on this topic along several axes:
\begin{enumerate}
    \item While earlier work analyzed only the case of learning with finite hypothesis classes, we characterize learnability via a combinatorial dimension.
    Our dimension based analysis implies learnability, with explicit successful learners, for infinite classes as well (Section \ref{Binary}). We note that the mistake bound in this model is always upper bounded by the usual (no improvements) mistake bound, however, in some cases there is a big gap between the two (Observations 1 and  \ref{obs: infinite littlestone} in that section).
    \item We extend the scope of addressed by earlier works on learning with improving agents by analyzing the multiclass case. Namely, we consider setups in which there are (arbitrarily) many possible labels with a user-preference ordering over these labels. This reflects a situation like having one's work being evaluated for several appreciation levels (say, your paper may be accepted as a poster, for a spotlight talk, for a full oral presentation, or for best paper award ....) (Section \ref{Mlticlass_full}).

    \item Once we discuss the multiclass setup, there is a natural question  about the feedback provided to an online learner. When the labels are binary, a feedback indicating whether a predicted instance label is true or false, also reveals its true label. In contrast, with more than two possible labels, the distinction between full information feedback - revealing the correct label, and partial information setup where the feedback is restricted to correct/wrong label.
    In Section \ref{bandit_uwg} we analyze this `bandit' setup (that is irrelevant to the binary classification setup). We provide a combinatorial characterization of the optimal mistake bound for this setup, as well as describe the optimal (mistake minimizing) learner. In Subsection \ref{price_bandits} we analyze the price in terms of additional mistakes, of the learner having only limited bandit feedback. 
    \item An underlying feature of learning with improvements is the notion of an \emph{improvement graph} whose nodes are agents' feature vectors and edges correspond to the ability of an agent to shift their feature vector (say, ``paper with typos" to ``paper without typos"). We extend previous work by removing the requirement that these graphs have a bounded degree.
    \item Finally, in Section \ref{Fullfeedback_weighted}, we extend our investigation to the setup in which the agents incur a cost for improving their features.
\end{enumerate}

\textbf{Related Work}



We study learning with improvement setting initiated by \citet{attias2025pac} who studied the problem in the PAC setting. They assume each agent can improve their features to a set of allowed features, if doing so would help them get a more desirable prediction. They show a separation between PAC learning and PAC learning with improvement. They also show that for some classes, allowing improvement makes it possible to find classifiers that achieve zero error as opposed to arbitrarily small error. This problem was extended by \citet{sharma2025conservative} to the online setting where the agents are chosen adversarially. However, \citet{sharma2025conservative} only consider \textit{finite} and \textit{binary} hypothesis classes. Moreover, they assume the number of points each agent can improve to is bounded. In this work we first extend their results (in the online setting) to infinite hypothesis classes and general improvement sets. We also introduce a model for studying the multiclass hypothesis classes where we associate each label with a value and each improvement with a cost.

Some other prior work study the problem of incentivizing agents to improve \cite{MMH20, kleinberg2020classifiers, haghtalab20, shavit2020causal}. However, following \citep{attias2025pac, sharma2025conservative} we assume a fixed improvement set and try to minimize the classification error.

A widely studied problem, which is closely related to the classification with improvement setting, is strategic classification \citep{hardt2016strategic}. In this setting, the agents can game and manipulate their features to get a better prediction. However, unlike the improvement setting, the manipulation does not truly change their classification. A common framework for this problem is to have a \textit{manipulation graph} whose nodes represent agents (their features) and an edge between two features means the agents can manipulate their features from one node to the other \citep{Zhang2021IncentiveAwarePL,lechner2022learning,pmlr-v202-lechner23a,ahmadi2023fundamental,cohen2024learnability,ahmadi2024strategic}. Notably, \citet{ahmadi2024strategic} introduce a new dimension that characterizes the optimal mistake bound of strategic online classification for binary labels. While our dimension for the binary setting is similar to theirs, we extend their results in two ways. First, we consider multiclass setting. Second, they consider cost of manipulation to be zero for the nodes that are connected by an edge (which is w.l.o.g. in the binary setting, as otherwise we could simply remove the edges whose cost is larger than the difference in utility of labels 0 and 1).

Online learning was introduced in seminal work of \citet{littlestone1988learning}. The Littlestone dimension of a hypothesis class is defined as the maximum depth of a tree that it \textit{shatters}. The dimensions that we introduces in this work have similar prototype. Our results in the bandit setting, where the learning algorithm only learns whether its prediction was correct or not as opposed to receiving the correct label, build on results in classic setting \cite{daniely2015multiclass, pmlr-v76-long17a}.

\section{Notation and Setup}

We let $\X$ and $\Y$ denote the instance and label space, respectively. We let $\X$ be any discrete space. In the binary setting, we let $\Y=\{0,1\}$ and in the multiclass setting, $\Y$ can be any finite set. A hypothesis is a function from $\X$ to $\Y$. A hypothesis class $\H\subseteq \Y^\X$ is a set of hypotheses. We use $\H_{x,y}:=\{h\in\H:h(x)=y\}$ to denote the subset of hypotheses that label $x$ with $y$. We study the online learning with improvements on a graph $G=(V,E)$ \cite{sharma2025conservative}, which we call the \textit{improvement graph}. In this setting, each agent is originally represented by a feature vector in $\X$. Let $V=\X$ denote the nodes of the graph. An edge $(x,v)\in E$ means an agent whose original feature is $x$ can potentially improve its features to $v$ to get a more desirable prediction. For each node $x$, let $\Delta(x)=\{v:(x,v)\in E\}$ denote the set of its neighbors, which we also refer to as its \textit{improvement set}. We assume each node $x\in V$ of the graph has a self-loop, that is, $x\in \Delta(x)$. We consider both unweighted and weighted graphs. In the weighted case, we let $\cost:E\to \mathbb{R}^+$ be the weight function where for an edge $e=(x,v)\in E$, $\cost(e)$ is the cost for an agent to improve its features from $x$ to $v$. Here we assume for each $x\in V$, $\cost(x,x)=0$. In case of an unweighted graph, we assume there is no cost for moving, i.e., $\cost(e)=0, \forall e\in E$.  We also define a utility function, $\val:\Y\to\mathrm{\mathbb{R}}$, where for $y\in\Y$, $\val(y)$ represents how valuable $y$ is.

The online learning with improvement happens in rounds as follows:

At time $t=1, 2, \cdots$
    \begin{enumerate}
        \item The environment presents an agent $\supt{x}$ to the learner.
        \item The learner implements a hypothesis $\supt{\hat{h}}$.
        \item The agent ``best responds'' to $\supt{\hat{h}}$ and improves to $\supt{v}$.
        \item The environment selects $\supt{y}$.
        \item The learner incurs loss $\indicator{\supt{\hat{h}}(\supt{v})\neq \supt{y}}$.
        \item The Learner receives ``some feedback'' on its prediction.
    \end{enumerate}

We say the learning problem is realizable by $\H$, if there exists some unknown $h^*\in \H$, such that for all $T\in \mathbb{N}$ and for all $t\leq T$, $h^*(\supt{v})=\supt{y}$.

Here, best response is defined as follows: for any $h$ and $x$, the agent will improve its features to $v\in \Delta(x)$ to maximize $\val(h(v))-\val(h(x))-\cost(x,v)$. If there is no $v$ with $\val(h(v))-\val(h(x))-\cost(x,v) > 0$, the agent doesn't move.

In particular, prior work of \citet{sharma2025conservative} studies the binary case with unweighted graph. In this case, $\val(1) > \val(0)$ and for any $x$, if $h(x)=0$ and there is $v\in \Delta(x)$ with $h(v)=1$, the agent $x$ moves to $v$ (in case of multiple such $v$'s, we assume the agent chooses adversarially). Otherwise, the agent doesn't move. Let $\Delta^+_h(x) = \{x'\in \Delta(x):h(x')=1\}$. Then the improvement set of an agent $x$ w.r.t. a classifier $h$ is as follows:
$$\Delta_h(x) = 
\begin{cases}
    \Delta^+_h(x) &\text{ if } h(x)=0 \text{ and }\Delta^+_h(x) \neq \emptyset\\
    \{x\} &\text{ otherwise }\\
\end{cases}
$$

In what follows, we study different variations of the problem depending on the label space, the graph, and the type of feedback the learner receives. Unless otherwise stated, we assume the input sequence is realizable by $\H$.

\section{Binary Classes} \label{Binary}

In this section we focus on binary classes with $\val(1)>\val(0)$. We also assume the graph is unweighted. \citet{sharma2025conservative} studied this setting for finite classes and graphs with bounded degree. Specifically, they show an upper bound of $(\Delta_G+1)\cdot\log(|\H|)$ where $\Delta_G$ is the maximum degree of a node in $G$. However, this bound is not tight. In fact, we can readily improve this result to $\ldim{\H}$ which denotes the Littlestone dimension of $\H$ \citep{littlestone1988learning}. The Littlestone dimension is defined exactly as Definition~\ref{def: ildim} for $\Delta(x)=\{x\}$. It characterizes the optimal mistake bound in the online learning without improving agents \citep{littlestone1988learning}.

\begin{observation}
    Any learner $A$ for online learning without improvement can be converted to a learner $A_I$ for online learning with improvement. This can simply be done as follows: The learner receives $\supt{x}$ and implements $\supt{h}$ where $\supt{h}(\supt{x})=A(\supt{x})$ and $\supt{h}(x')=0$ for all $x'\neq \supt{x}$, $A$ is updated if made a mistake. Thus, the agents never change their features and $A_I$ makes a mistake if and only if $A$ makes a mistake.
\end{observation}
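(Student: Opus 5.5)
The plan is to check directly that the reduction makes every agent stay put, so that each round of the improvement game is exactly one round of the standard online game run by $A$ on the same sequence. Then the mistake bound of $A$ carries over, and taking $A$ to be the Standard Optimal Algorithm gives the $\ldim{\H}$ bound.

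\textbf{Step 1: agents never move.} At round $t$, $A_I$ implements $\supt{h}$, which equals $A(\supt{x})$ at $\supt{x}$ and $0$ everywhere else. We split on the value at $\supt{x}$.
\begin{itemize}
\item If $\supt{h}(\supt{x})=1$, the agent already has the preferred label. So $\Delta_{\supt{h}}(\supt{x})=\{\supt{x}\}$.
\item If $\supt{h}(\supt{x})=0$, every $v\in\Delta(\supt{x})$ with $v\neq\supt{x}$ has $\supt{h}(v)=0$. Hence $\Delta^+_{\supt{h}}(\supt{x})=\emptyset$, and again $\Delta_{\supt{h}}(\supt{x})=\{\supt{x}\}$.
\end{itemize}
In both cases $\supt{v}=\supt{x}$. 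This holds for any improvement graph, with no degree assumption, and for an adversarial tie-breaking rule among best responses, since no alternative with positive gain exists.

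\textbf{Step 2: loss, feedback and realizability transfer.} Since $\supt{v}=\supt{x}$, the loss $\indicator{\supt{h}(\supt{v})\neq\supt{y}}$ equals $\indicator{A(\supt{x})\neq\supt{y}}$. Realizability in the improvement model says $h^*(\supt{v})=\supt{y}$, which is $h^*(\supt{x})=\supt{y}$. So the sequence $(\supt{x},\supt{y})_t$ is realizable by $\H$ in the standard sense.

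In the binary case the feedback reveals $\supt{y}$, so $A_I$ can feed $(\supt{x},\supt{y})$ to $A$. If $A$ is mistake-driven, it suffices to update only on mistakes, as the statement says. Otherwise we update on every round, which is also consistent because the label is known.

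\textbf{Step 3: conclude.} Rounds of $A_I$ correspond one-to-one with rounds of $A$ on a realizable sequence, and the mistakes coincide. Hence $\mathrm{mistakes}(A_I)\le M(A)$. Instantiating $A$ with the SOA gives a bound of $\ldim{\H}$ for learning with improvements, which improves on $(\Delta_G+1)\log|\H|$.

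The only subtle point is Step 1. It relies on the agent moving only under a strict improvement, and on the self-loop convention, where $\supt{x}$ itself contributes no gain. Once that is confirmed, the rest is routine bookkeeping.
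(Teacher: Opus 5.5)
Your proposal is correct and takes essentially the paper's route. The paper simply asserts that under $\supt{h}$ no agent ever moves, so $A_I$ errs exactly when $A$ does; you supply the details (the two-case check that $\Delta_{\supt{h}}(\supt{x})=\{\supt{x}\}$, the transfer of realizability via $\supt{v}=\supt{x}$, and the fact that binary feedback reveals $\supt{y}$). Your mistake-driven caveat is harmless but unnecessary: updating any deterministic $A$ only on mistake rounds already works, because those rounds form a realizable sequence on which $A$ errs every time, so there are at most $M(A)$ of them.
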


The above observation implies that the optimal mistake bound when allowing improvement is at most the optimal mistake bound when we don't. Since the later is exactly $\ldim{\H}$ \citep{littlestone1988learning}, the former is also bounded by the same term. This improves the previous results as $\ldim{\H} \leq \log(|\H|)$. However, as we next show, finite Littlestone dimension is not necessary for having a finite mistake bound.

\begin{observation}
    \label{obs: infinite littlestone}
    Let $\X=\bigcup_{i\in \mathbb{N}} \{x_i, x_i'\}$ such that $\Delta(x_i)=\{x_i,x_i'\}$ and $\Delta(x_i')=\{x_i'\}$. Let $\H \subseteq \{0,1\}^\X$ be such that each $h\in \H$ labels each $x_i'$ with 1 and $\H$ can produce any labeling on $\{x_1,x_2, \cdots\}$. Then $\H$ has infinite Littlestone dimension. However, a learner that at each round implements $g$ with $g(x_i)=0$ and $g(x'_i)=1$ for all $i$, does not make any mistake.
\end{observation}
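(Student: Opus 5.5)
The observation has two parts, and I will prove them separately: $\ldim{\H}=\infty$, and the fixed learner $g$ makes no mistakes.

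\textbf{Infinite Littlestone dimension.} For any depth $d$ I will build a complete binary mistake tree of depth $d$ that $\H$ shatters. Every node at level $j$ is labeled by $x_j$. Fix a root-to-leaf path, and let $(b_1,\dots,b_d)$ be the labels on its edges. By assumption $\H$ realizes every labeling of $\{x_1,x_2,\dots\}$, so some $h\in\H$ has $h(x_j)=b_j$ for all $j\le d$, and it labels every $x_i'$ with $1$. Hence every path is realized. Since $d$ is arbitrary, $\ldim{\H}=\infty$. Here I take the Littlestone dimension in its usual sense, with $\Delta(x)=\{x\}$, as the paper stipulates.

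\textbf{Zero mistakes for $g$.} I will check each type of agent that the environment can present.

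If the agent is $\supt{x}=x_i$, then $g(x_i)=0$. Its improvement set $\Delta(x_i)=\{x_i,x_i'\}$ contains $x_i'$ with $g(x_i')=1$, so $\Delta^+_g(x_i)=\{x_i'\}\ne\emptyset$. By the definition of $\Delta_g$, the agent moves to $\supt{v}=x_i'$, and no other option exists, so adversarial tie-breaking does not matter.

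If the agent is $\supt{x}=x_i'$, then $g(x_i')=1$ already, so it stays at $\supt{v}=x_i'$.

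In both cases the agent ends at $\supt{v}=x_i'$ for some $i$. Realizability gives $\supt{y}=h^*(\supt{v})$ for some $h^*\in\H$. Every hypothesis in $\H$ labels every $x_i'$ with $1$, so $\supt{y}=1=g(\supt{v})$. The loss $\indicator{g(\supt{v})\neq\supt{y}}$ is therefore $0$ in every round.

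I do not expect a real obstacle in either part. The one point needing care is that realizability is imposed at the post-improvement point $\supt{v}$, not at $\supt{x}$; this is exactly what allows $g$ to disagree with $h^*$ on the $x_i$'s at no cost.
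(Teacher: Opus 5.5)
Your proof is correct and follows the reasoning the paper leaves implicit in the observation. The complete depth-$d$ tree that queries $x_1,\dots,x_d$ is shattered for every $d$, which gives infinite Littlestone dimension. Under $g$, every agent ends at some $x_i'$, and since realizability is imposed at the post-improvement point, $y^{(t)}=h^*(x_i')=1=g(x_i')$ in every round.
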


We now generalize the previous results by defining a Littlestone-type dimension and showing that it characterizes the optimal number of mistakes. Interestingly, the dimension that we define here is closely related to Strategic Littlestone dimension \cite{ahmadi2024strategic}. In particular, our trees have the same structure, however, our notion of shattering differs. The intuition behind our definition is that to force a mistake, the adversary must be able to label points in the improvement set with 0 as otherwise the learner can always predict label 1 on those points and the agent will move there similar to what happens in Observation~\ref{obs: infinite littlestone}.

\begin{definition}[Improvement Littlestone Tree ($\ilt$)]
    An ILT is a tree whose nodes are labeled by $\X$ such that each node $x$ has a set of $|\Delta(x)|+1$ outgoing edges labeled by $(x,1)$ and $(v,0)$ for all $v\in \Delta(x)$ (note that $x\in \Delta(x))$. An $\ilt$ is said to be shattered by $\H$ if all root-to-leaf paths of the form $(u^{(1)},y^{(1)}), \cdots, (u^{(d)},y^{(d)})$ are realizable by $\H$. That is, there exists $h\in\H$ such that $h(\supt{u})=\supt{y}$ for $1\leq t \leq d$, where $d$ is the depth of the tree.
\end{definition}

\begin{definition}[Improvement Littlestone Dimension $\mathrm{ILdim}$]
    \label{def: ildim}
    Improvement Littlestone Dimension is defined as the maximum depth of an $\ilt$ shattered by $\H$. In case of an unbalanced tree, depth of the tree is the length of its shortest branch.
\end{definition}

\begin{theorem}
    The optimal number of mistakes in the realizable online learning with improvement setting for deterministic learners is $\ildim{\H}$.
\end{theorem}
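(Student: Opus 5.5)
We prove the two directions separately: a lower bound via an adversary that walks down a shattered tree, and an upper bound via a Standard-Optimal-Algorithm-style learner that keeps the version space and reduces its $\mathrm{ILdim}$ on each mistake. Throughout, write $V_t\subseteq\H$ for the hypotheses consistent with the feedback so far, with $V_1=\H$.

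\textbf{Lower bound.} Let $T$ be an $\ilt$ of depth $d$ shattered by $\H$. The adversary starts at the root and keeps a current node $x$ with the invariant that the labelled path to $x$ is realizable. At round $t$ it presents the agent $\supt{x}=x$; the learner implements a deterministic $\supt{\hat h}$.
\begin{itemize}
\item If $\supt{\hat h}(x)=0$ and $\Delta^+_{\supt{\hat h}}(x)=\emptyset$, the agent stays at $x$. The adversary answers $\supt{y}=1$ and follows the edge $(x,1)$. The learner errs.
\item Otherwise the agent ends at some $v\in\Delta(x)$ with $\supt{\hat h}(v)=1$. This covers both $v=x$ when $\supt{\hat h}(x)=1$, and an improvement to $v\in\Delta^+_{\supt{\hat h}}(x)$. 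The adversary answers $\supt{y}=0$ and follows the edge $(v,0)$, which exists because $v\in\Delta(x)$. The learner errs again.
\end{itemize}
Every round costs the learner a mistake, and the labelled pair $(\supt{v},\supt{y})$ is exactly the edge label followed. Since every root-to-leaf path is realizable, the resulting sequence is realizable by $\H$, giving $d$ mistakes.

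\textbf{Upper bound: the learner.} Assume $\ildim{\H}=d<\infty$ and let $\ildim{\emptyset}=-1$. On agent $x$, compute $a=\ildim{(V_t)_{x,1}}$ and $b_v=\ildim{(V_t)_{v,0}}$ for each $v\in\Delta(x)$.
\begin{itemize}
\item \emph{Case A:} some $v\in\Delta(x)$ has $b_v<\ildim{V_t}$. Implement $\hat h$ with $\hat h(v)=1$ and $\hat h(w)=0$ for all $w\neq v$. If $v=x$ the agent stays at $x$, which is labelled $1$. If $v\neq x$, then $\hat h(x)=0$ and $v$ is the unique positive point in $\Delta(x)$, so the agent moves to $v$. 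Either way the prediction on $\supt{v}=v$ is $1$. A mistake means $y=0$, so $V_{t+1}\subseteq(V_t)_{v,0}$ and the dimension drops.
\item \emph{Case B:} every $v\in\Delta(x)$ has $b_v=\ildim{V_t}$. Implement the all-zero hypothesis, so the agent stays at $x$ and is predicted $0$. A mistake means $y=1$, so $V_{t+1}=(V_t)_{x,1}$.
\end{itemize}

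\textbf{Upper bound: the key lemma.} It remains to show $a<\ildim{V_t}$ in Case B. Suppose instead that $(V_t)_{x,1}$ and every $(V_t)_{v,0}$, $v\in\Delta(x)$, shatter trees of depth at least $k=\ildim{V_t}$. Hang these $|\Delta(x)|+1$ trees under a new root $x$, along the edges $(x,1)$ and $(v,0)$. Every branch of the new tree has length at least $k+1$, and every path is realizable in $V_t$. So $V_t$ shatters a tree of depth $k+1$, a contradiction. This is the step needing care. It relies on depth being defined as the shortest branch, which lets subtrees of unequal depth be combined, and on finiteness of $\ildim{V_t}$, which holds since $V_t\subseteq\H$.

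\textbf{Conclusion and main obstacle.} Each mistake strictly decreases $\ildim{V_t}$, while realizability keeps $V_t\neq\emptyset$ and hence $\ildim{V_t}\ge 0$. So there are at most $d$ mistakes. The main subtlety is the learner's construction: Case A must force the agent to land exactly at the chosen $v$. This works because $v$ is made the only positive point, and it is the reason tree edges are indexed by $(v,0)$ for $v\in\Delta(x)$. Degree need not be bounded; the argument only uses that $\Delta(x)$ indexes the edges.
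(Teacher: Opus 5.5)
Your proposal is correct and follows the paper's proof. The lower bound is the same adversary walking down a shattered $\mathrm{ILT}$: it takes edge $(x,1)$ when the agent stays at a point predicted $0$, and edge $(v,0)$ at the agent's final position otherwise. The upper bound is the paper's ISOA: make some $v\in\Delta(x)$ with $\mathrm{ILdim}(V_{v,0})<\mathrm{ILdim}(V)$ the unique positive point, so the agent's destination is known, or predict all-zero if no such $v$ exists. You also use the same subtree-gluing argument to show $\mathrm{ILdim}(V_{x,1})<\mathrm{ILdim}(V)$ in the false-negative case, with your single-positive-point hypothesis being only a cleaner packaging of the paper's construction (define $\tilde h$ globally, then flip all but one positive neighbor).
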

\begin{proof}
    The lower bound simply follows by following the maximal tree shattered by $\H$. That is, the adversary starts with the root of the tree, $x$ and receives the learner's hypothesis $h$. If $\Delta_h(x)=\{x\}$, the adversary moves along the edge labeled with $(x, 1-h(x))$. Otherwise, it chooses any $v\in \Delta_h(x)$ and moves along the edge $(v, 0)$. It then presents the child along the selected edge and proceed similarly. By definition of the dimension, this process can continue for at least $\ildim{\H}$ rounds where in each round, a mistake is forced.
    
    We now show that Algorithm~\ref{alg:isoa} achieves this lower bound. We claim that each mistake reduces the dimension of version space by 1, and thus the algorithm makes at most $\ildim{\H}$ mistake since the dimension is nonnegative. We prove the claim for two types of mistakes separately:

    False positive: In this case, either the agent remained at $\supt{v} = \supt{x}$ or moved to some point labeled 1 by the learner's predictor. In the later case, we know exactly to which point the agent moved to (i.e., $\supt{v}$) because by construction, we only label this point in the neighborhood with 1. By prediction principle, we have $\ildim{\vs_{\supt{v}, 0}} < \ildim{\vs}$, so since in case of false positive we set the new version to be $\ildim{\vs_{\supt{v}, 0}}$, the claim holds.

    False negative: In this case, the agent has not moved (otherwise our prediction would have been positive). So $\supt{v}=\supt{x}$ and the new version space will be $\vs_{\supt{x}, 1}$. Furthermore by our prediction rule $\ildim{\vs_{v, 0}} \geq \ildim{\vs}$ for all $v\in \Delta(x)$ (in fact, they're equal as the dimension can only decrease). Now, assume for the sake of contradiction that $\ildim{\vs_{\supt{x}, 1}} \geq \ildim{\vs}$. Then $\vs_{\supt{x}, 1}$ and all $\vs_{v, 0}$ shatters trees of depth $\ildim{\vs}$. We create a new tree whose root is $\supt{x}$ and whose out going edges are labeled by $(\supt{x},1)$, and $(v,0)$ for $v\in \Delta(\supt{x})$ and each edge connects to the respective tree of depth $\ildim{\vs}$. Then it's easy to see this tree of depth $\ildim{\vs} + 1$ is shattered by $\vs$, which contradicts the maximality of $\ildim{\vs}$. Hence, it must be that $\ildim{\vs_{\supt{x}, 1}} < \ildim{\vs}$ and the dimension decreases.
\end{proof}

\begin{algorithm}[tb]
  \caption{ISOA}
  \label{alg:isoa}
  \begin{algorithmic}
    \STATE Initialize $\vs = \H$
    \FOR{$t=1, 2,\cdots,$}
    \STATE Receive $\supt{x}$
    \STATE Let $\supt{\tilde{h}}$ be defined on each node $x$ as follows:
    $$\supt{\tilde{h}}(x)=\begin{cases}
        1 &\quad \text{if} \quad \ildim{\vs_{x, 0}} < \ildim{\vs}\\
        0 &\quad \text{otherwise}
    \end{cases}$$
    \STATE Let $\Delta_{\supt{\tilde{h}}}^+ = \{x'\in \Delta(\supt{x}):\supt{\tilde{h}}(x')=1\}$
    \IF{$\supt{\tilde{h}}(\supt{x}) = 1$ or $|\Delta_{\supt{\tilde{h}}}^+| =0$}
    \STATE Implement $\supt{h} \gets \supt{\tilde{h}}$
    \STATE $\supt{v} \gets \supt{x}$
    \ELSE
    \STATE Pick an arbitrary $\supt{v}\in \Delta_{\supt{\tilde{h}}}^+$
    \STATE Implement $\supt{h}$ that disagrees with $\supt{\tilde{h}}$ only on $\Delta_{\supt{\tilde{h}}}^+ \backslash \{\supt{v}\}$
    \ENDIF
    \IF{made a mistake}
    \STATE $\vs = \vs_{\supt{v}, 1-\supt{h}(\supt{v})}$
    \ENDIF
    \ENDFOR
  \end{algorithmic}
\end{algorithm}

\section{Full Feedback Multiclass - Unweighted Graph} \label{Mlticlass_full}

Here we assume the improvement graph is unweighted and $\Y = \{z_1, \cdots, z_k\}$ such that the labels are sorted in terms of preference $z_1 < z_2<\cdots<z_k$. Here $z < z'$ means $\val(z) <\val(z')$. Similar to the binary case, the agents are rational such that they will improve their features if and only if it allows them to receive a more preferred label. In this section, we assume the learner receives the final feature of the agent and its true label upon making a mistake. Note that receiving the final feature of the agent is only for simplifying the algorithm and is without loss of generality because our learner only allows a single choice for the agent to improve to (as we saw in the binary case). We address the bandit setting where the learner doesn't receive the true label in the next section.

We define a new tree-based dimension similar to an $\ilt$. The intuition behind the tree that we'll define is as follows: on any point $x$, in order to force a mistake, the adversary must be able to produce two different labels on $x$ itself. Moreover, on any neighbor $v\in \Delta(x)$, the adversary must be able to force a mistake by either producing two different labels or the minimal label $z_1$. Note that in case the agent has moved to $v$, the learner definitely doesn't label $v$ with $z_1$ as otherwise the agent wouldn't have moved to $v$.

\begin{definition}[Multiclass ILT]
    A multiclass ILT, is a tree whose nodes are labeled by $\X$. Each internal node $x$ has the following set of edges:
    \begin{itemize}
        \item Two edges labeled by $(x,y_1)$ and $(x,y_2)$ with $y_1\neq y_2$.
        \item For each $v\in \Delta(x)-\{x\}$: either an edge labeled $(v, z_1)$ or two edges labeled $(v,y_3)$ and $(v,y_4)$ with $y_3\neq y_4$ and $y_3,y_4\neq z_1$.
    \end{itemize}
    An ILT is said to be shattered by $\H$ if all of its branches are realizable by $\H$. 
\end{definition}

\begin{definition}[Multiclass $\mathrm{ILdim}$]
    $\ildim{\H}$ is defined as the maximum depth of an ILT shattered by $\H$ where the depth of a tree is defined as the length of its shortest branch.
\end{definition}

\begin{lemma}
    \label{lemma: lower bound, multiclass}
    Any deterministic learner in the multiclass online learning with improvement makes at least $\ildim{\H}$ mistakes.
\end{lemma}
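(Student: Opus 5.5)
We use an adversary argument that walks down a maximal shattered multiclass ILT, as in the lower-bound part of the binary theorem. Fix a multiclass ILT of depth $d=\ildim{\H}$ shattered by $\H$. The adversary keeps a current tree node, starting at the root. At each round it presents the label $x$ of the current node as $\supt{x}$ and sees the learner's (deterministic) hypothesis $\supt{h}$. It then picks an outgoing edge $(u,y)$ of that node and commits to $u=\supt{v}$ and $y=\supt{y}$. The edge is chosen so that (i) the agent's best response to $\supt{h}$ from $x$ can legitimately be $u$, and (ii) $\supt{h}(u)\neq y$. The adversary then moves to the child along that edge. Every root-to-leaf path is realizable by $\H$, so the whole interaction of at least $d$ rounds is consistent with some $h^*\in\H$. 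Hence the learner makes at least $d$ mistakes.

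The core step is a case analysis showing that an edge with properties (i) and (ii) always exists.

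\emph{Case 1: the agent does not move.} Then $\supt{v}=x$ is a valid best response. The node has two edges $(x,y_1)$ and $(x,y_2)$ with $y_1\neq y_2$, and $\supt{h}(x)$ differs from at least one of them. The adversary takes that edge.

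\emph{Case 2: the agent moves.} Then some $v\in\Delta(x)\setminus\{x\}$ has $\val(\supt{h}(v))>\val(\supt{h}(x))$. Ties among best responses are broken adversarially, as in the binary setup, so the adversary may choose any best-response target $v$. Since $\supt{h}(x)\ge z_1$, we get $\supt{h}(v)>z_1$, i.e.\ $\supt{h}(v)\neq z_1$.
\begin{itemize}
\item If the tree gives $v$ the single edge $(v,z_1)$, that edge produces a mistake.
\item Otherwise $v$ has two edges $(v,y_3)$ and $(v,y_4)$ with $y_3\neq y_4$, and $\supt{h}(v)$ differs from at least one of them; the adversary takes that one.
\end{itemize}
In both sub-cases the chosen edge's point equals the agent's actual final position $\supt{v}$, so the loss $\indicator{\supt{h}(\supt{v})\neq\supt{y}}=1$.

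I expect the main subtlety to be the consistency of the realizability condition. Realizability is required only at the final features $\supt{v}$, and the tree's edges are labeled exactly by the pair $(\supt{v},\supt{y})$. So a realizable branch gives an $h^*$ consistent with all revealed pairs, and the full feedback the learner receives (final feature and true label) is precisely the edge label, so the learner gains no extra information. One more point needs care: unbalanced trees. The depth is the length of the shortest branch, so every branch the adversary follows has at least $d$ edges, and at least $d$ mistakes are forced. If $\ildim{\H}$ is infinite, apply the same argument to shattered trees of every finite depth.
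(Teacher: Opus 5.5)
Your proposal is correct and follows the paper's argument: walk down a maximum-depth shattered multiclass ILT. If the agent stays, take an edge $(x,y)$ with $y\neq h(x)$. If it moves to $v$, take the edge $(v,z_1)$ when present, which forces a mistake since $h(v)\neq z_1$; otherwise take an edge $(v,y)$ with $y\neq h(v)$. Your extra points on unbalanced trees and infinite dimension are harmless refinements, and the adversarial tie-breaking assumption is unnecessary, since the argument works for whichever target $v$ the agent actually picks.
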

\begin{proof}
    The adversary picks an ILT with maximum depth. It starts with presenting the root of the tree, $x$, if the learners hypothesis $h$ is such that the agent won't move, the adversary moves along the edge labeled $(x,y)$ where $y\neq h(x)$. Otherwise, let $v$ be the node that the agent moves to. If there exists an edge labeled $(v,z_1)$ move along that edge, otherwise, move along the edge labeled $(v,y)$ with $y\neq h(v)$. In any case, the learner makes a mistake. The adversary then presents the child along the chosen edge and continues the process in the same way. This process can continue at least $\ildim{\H}$ rounds and thus the learner makes at least these many mistakes.
\end{proof}

\begin{lemma}
    \label{lemma: possibilities for ildim}
    Let $\vs \subseteq \H$. For each $x$, at least one of the following happens: $\max_y \ildim{\vs_{x,y}}$ is achieved at a unique $y$ (case 1) or there exists $v\in\Delta(x)-\{x\}$ such that $\max_y \ildim{\vs_{v,y}}$ is achieved at a unique $y\neq z_1$ (case 2), or at least one of these maximums are strictly smaller $\ildim{\vs}$ (case 3). 
\end{lemma}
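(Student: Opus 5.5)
We argue by contradiction: assume none of the three cases occurs at some $x$, and build a multiclass ILT of depth $\ildim{\vs}+1$ shattered by $\vs$, which contradicts the definition of $\ildim{\vs}$. Write $d=\ildim{\vs}$ and assume $d<\infty$; if $d=\infty$ the statement is vacuous or needs a separate convention, and we set that case aside. Since $\vs_{u,y}\subseteq\vs$ and the dimension is monotone under inclusion (any tree shattered by a subclass is shattered by the class), every $\ildim{\vs_{u,y}}\le d$. So the failure of case 3 means that for $u=x$ and for every $v\in\Delta(x)-\{x\}$ we have $\max_y\ildim{\vs_{u,y}}=d$. Here we read ``these maximums'' as ranging over $x$ and all $v\in\Delta(x)-\{x\}$.

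Next we use the failure of cases 1 and 2. Since case 1 fails, the maximum $d$ over labels at $x$ is attained by at least two labels $y_1\neq y_2$, so both $\vs_{x,y_1}$ and $\vs_{x,y_2}$ shatter multiclass ILTs $T_{x,y_1}$ and $T_{x,y_2}$ of depth $d$. Fix $v\in\Delta(x)-\{x\}$. Since case 2 fails, the maximizer set for $v$ is either not a singleton or is exactly $\{z_1\}$. In the first situation it contains two labels; if two of them differ from $z_1$, call them $y_3\ne y_4$, and we attach edges $(v,y_3),(v,y_4)$ leading to depth-$d$ trees shattered by $\vs_{v,y_3}$ and $\vs_{v,y_4}$. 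Otherwise the maximizer set contains $z_1$, either as the unique maximizer or as one of exactly two maximizers together with a single other label. In that case we attach a single edge $(v,z_1)$ leading to a depth-$d$ tree shattered by $\vs_{v,z_1}$. Each of these attachments is exactly one of the two options the definition of a multiclass ILT allows.

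Finally, we assemble the tree. Its root is labeled $x$, and we hang the chosen depth-$d$ trees below the corresponding edges. Every branch consists of a first edge $(u,y)$ followed by a branch of a tree shattered by $\vs_{u,y}$. That branch is realized by some $h\in\vs_{u,y}$, which also satisfies $h(u)=y$, so the whole branch is realized by $\vs$. The shortest branch has length $d+1$, which contradicts maximality, so one of the three cases must hold.

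The main obstacle is the bookkeeping in the $v$ case split: we must check that every way case 2 can fail (two non-$z_1$ maximizers, maximizers containing $z_1$, or the unique maximizer $z_1$) produces an admissible edge set. We also need the monotonicity of the dimension and a convention for $d=\infty$. Otherwise the argument is the same gluing step used in the false-negative part of the binary theorem.
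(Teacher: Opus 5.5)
Your proof is correct and is essentially the paper's argument: if all three cases fail, the failure of case 3 forces every maximum to equal $d$, and gluing depth-$d$ shattered trees under a root labeled $x$ yields a shattered multiclass ILT of depth $d+1$, a contradiction. Your explicit split at each $v$ (two maximizers other than $z_1$, or else the single edge $(v,z_1)$) and your caveat that $d$ must be finite fill in details the paper's proof leaves implicit.
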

\begin{proof}
    Assume otherwise. Let $L=\ildim{\vs}$. Since case 3 doesn't hold, for all $x'\in \Delta(x)$, $\max_y \ildim{\vs_{x',y}}=L$. Since case 1 and 2 don't hold, there are $y_1 \neq y_2$ such that $\ildim{\vs_{x,y_1}}=\ildim{\vs_{x,y_2}}=L$ and for each $v\in \Delta(x)-\{x\}$, either $\ildim{\vs_{v,z_1}}=L$ or there are $y_3\neq y_4$ such that $\ildim{\vs_{v,y_3}}=\ildim{\vs_{v,y_4}}=L$. However, this means $\vs$ can shatter an ILT of depth $L+1$, which contradicts the definition of $\ildim{\vs}$.
\end{proof}

\begin{algorithm}[tb]
  \caption{Multiclass ISOA}
  \label{alg:multiclass isoa}
  \begin{algorithmic}
    \STATE Initialize $\vs = \H$
    \FOR{$t=1, 2,\cdots,$}
    \STATE Receive $\supt{x}$
    \IF{$\exists x'\in \Delta(\supt{x})$: $\max_y \ildim{\vs_{x',y}} < \ildim{\vs}$}
    \STATE Set $\supt{h}(x')=z_k$
    \STATE Set $\supt{h}(v)=z_1$ for all $v\in\Delta(\supt{x})-\{x'\}$
    \ELSIF{$y = \argmax_y \ildim{\vs_{\supt{x}, y}}$ is unique}
    \STATE Set $\supt{h}(\supt{x})=y$
    \STATE Set $\supt{h}(v)=z_1$ for all $v\in\Delta(\supt{x})-\{\supt{x}\}$
    \ELSE
    \STATE Let $x'\in \Delta(\supt{x})$ be such that
    \STATE $y = \argmax_y \ildim{\vs_{x', y}}$ is unique and $y\neq z_1$
    \STATE Set $\supt{h}(x')=y$
    \STATE Set $\supt{h}(v)=z_1$ for all $v\in \Delta(\supt{x})-\{x'\}$
    \ENDIF
    \IF{made a mistake}
    \STATE Let $\supt{v}$ be the final features of the agent
    \STATE Receive the true label $\supt{y}$ for $\supt{v}$
    \STATE $\vs = \vs_{\supt{v}, \supt{y}}$
    \ENDIF
    \ENDFOR
  \end{algorithmic}
\end{algorithm}

\begin{theorem}
    \label{thm: realizable multiclass}
    The optimal number of mistakes achieved by deterministic learners in the multiclass online learning with improvement equals $\ildim{\H}$. Furthermore, this optimal number is achieved by Algorithm~\ref{alg:multiclass isoa}.
\end{theorem}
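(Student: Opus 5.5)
The lower bound is Lemma~\ref{lemma: lower bound, multiclass}. It therefore remains to show that Algorithm~\ref{alg:multiclass isoa} makes at most $\ildim{\H}$ mistakes. The plan mirrors the binary proof: every mistake strictly decreases $\ildim{\vs}$, and the dimension is a nonnegative integer. Lemma~\ref{lemma: possibilities for ildim} guarantees that one of the three branches of the algorithm is always applicable (case 3 is the first branch, case 1 the second, case 2 the third), so the algorithm is well defined.

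First I would pin down the agent's best response to the implemented $\supt{h}$. In every branch, $\supt{h}$ assigns $z_1$ to all points of $\Delta(\supt{x})$ except possibly one designated point $x'$.
\begin{itemize}
\item If $x'=\supt{x}$ (possible in branches one and two), nothing in the neighborhood has a higher label than $\supt{x}$, so the agent stays and $\supt{v}=\supt{x}$.
\item If $x'\neq\supt{x}$, then $\supt{h}(\supt{x})=z_1$. The label of $x'$ is $z_k$ in branch one and some $y\neq z_1$ in branch three, so it is strictly preferred. The agent therefore moves to $x'$, the unique improving point, and $\supt{v}=x'$.
\end{itemize}
In all cases $\supt{v}$ is the designated point, and the label predicted on it is $\supt{h}(\supt{v})$ as set by the algorithm. (If $\supt{h}(x')=z_1$ and $x'=\supt{x}$ in branch one, the agent also stays, which is consistent.)

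Next comes the case analysis on a mistake at $\supt{v}$, where the version space becomes $\vs_{\supt{v},\supt{y}}$ with $\supt{y}\neq\supt{h}(\supt{v})$.
\begin{itemize}
\item Branch one: $\supt{v}=x'$ was chosen with $\max_y\ildim{\vs_{x',y}}<\ildim{\vs}$, so the new dimension drops regardless of $\supt{y}$.
\item Branch two: $\supt{v}=\supt{x}$ and the predicted label is the unique maximizer of $\ildim{\vs_{\supt{x},y}}$. Branch one did not fire, so this maximum equals $\ildim{\vs}$, and every other label gives strictly smaller dimension. Hence $\ildim{\vs_{\supt{v},\supt{y}}}<\ildim{\vs}$.
\item Branch three: the same argument applies at $x'$ with the unique maximizer $y\neq z_1$. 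Any $\supt{y}\neq y$, including $z_1$, gives strictly smaller dimension.
\end{itemize}
Here I use that restriction never increases $\ildim{\cdot}$ (a shattered tree for a subclass is shattered by the superclass), so "not strictly smaller" means "equal".

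I must also check realizability: $h^*\in\vs_{\supt{v},\supt{y}}$ because $h^*(\supt{v})=\supt{y}$. This keeps $\vs$ nonempty and keeps $\ildim{\vs}\geq 0$. Combining the steps, the number of mistakes is at most $\ildim{\H}$.

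The main obstacle is the best-response step: confirming that the agent's move is uniquely determined and lands on the designated point, so that the mistake is charged to the point where the dimension argument applies. This includes the degenerate situation in branch one where $x'=\supt{x}$ or $\Delta(\supt{x})=\{\supt{x}\}$. Branch three relies on $y\neq z_1$, so that the agent actually moves there. If $\dim$ values may be infinite, I would also note that the bound is trivial when $\ildim{\H}=\infty$.
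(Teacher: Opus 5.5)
Your proof is correct and follows the paper's approach: the lower bound is Lemma~\ref{lemma: lower bound, multiclass}, and for the upper bound Lemma~\ref{lemma: possibilities for ildim} guarantees one of the algorithm's three branches applies, after which each mistake strictly decreases $\ildim{\vs}$. The paper states the decrease only as ``by construction''; your best-response analysis (the agent always lands on the single designated point) and your branch-by-branch check supply exactly the details that phrase leaves implicit.
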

\begin{proof}
    The lower bound is proved in Lemma~\ref{lemma: lower bound, multiclass}. By Lemma~\ref{lemma: possibilities for ildim} at least one of the cases in Algorithm~\ref{alg:multiclass isoa} happen. Then by construction, each mistakes reduces the dimension of the version space by at least one. Since the dimension is nonnegative, the algorithm makes at most $\ildim{\H}$ mistakes.
\end{proof}

\section{Bandit Feedback - Unweighted Graph} \label{bandit_uwg}

Here we study the same setting as the previous section, however, we assume the learner only learns whether it has made a mistake or not and it won't get the true label. 

For any hypothesis class $\vs$, we define $\vs_{x\nrightarrow y}:=\{h\in \vs:h(x)\neq y\}$. Unlike the full feedback setting where the learner received the true label after each mistake, here the learner only learns that the label of some $x$ is not $y$. Thus, we can update the set of candidate hypothesis, the version space, from $\vs$ to $\vs_{x\nrightarrow y}$. Our algorithm will keep track of the version space and predicts such that each mistake reduces its \textit{dimension}. The intuition behind this dimension, which we shortly define, is that the adversary must balance between forcing a mistake and keeping the version space ``large enough''. On any point $x$ (ignoring improvement for now), to keep this balance and force a mistake, the adversary must for all possible labels $y$ that the learner predicts, $\vs_{x\nrightarrow y}$ is such that the game can continue for as many rounds as possible. We now formally define the tree and dimension and then show it characterizes the optimal mistake bound.

\begin{definition}[Bandit Multiclass ILT]
    A bandit multiclass ILT (BILT), is a tree whose nodes are labeled by $\X$ and each internal node $x$ has the following set of edges:
    \begin{itemize}
        \item $k$ edges labeled by $(x,z_i)$ for $i\in[k]$.
        \item For each $v\in \Delta(x)-\{x\}$, $k-1$ edges labeled by $(v,z_i)$ for $i\in [k]\backslash \{1\}$
    \end{itemize}
    A BILT is shattered by $\H$ if for any branch $(u_1,y_1), \cdots, (u_d,y_d)$ there is $h\in\H$ such that $h(u_i)\neq y_i$ for all $i\leq d$.
\end{definition}

\begin{definition}[Multiclass $\mathrm{BILdim}$]
    $\bildim{\H}$ is defined as the maximum depth of a BILT shattered by $\H$ where the depth of a tree is defined as the length of its shortest branch.
\end{definition}

\begin{theorem}
    The optimal mistake bound of deterministic learners in the multiclass online learning with improvement and bandit feedback for sequences realizable by $\H$ is $\bildim{\H}$.
\end{theorem}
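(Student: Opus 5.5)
We prove the two directions separately, in the same way as for the binary and full-feedback multiclass results: an adversary argument along a maximal shattered BILT for the lower bound, and a standard-optimal-style learner whose every mistake lowers $\bildim{\cdot}$ of the version space for the upper bound. Throughout, the version space after a mistake on final feature $v$ with predicted label $y$ is $\vs_{v\nrightarrow y}$. This is exactly the information bandit feedback provides. As in the full-feedback section, we take the final feature to be known to the learner, because our learner leaves the agent at most one point to improve to.

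\textbf{Lower bound.} Fix a shattered BILT of depth $d=\bildim{\H}$ and present its root $x$.
\begin{itemize}
\item If the learner's hypothesis $h$ keeps the agent at $x$, the adversary follows the edge $(x,h(x))$.
\item If the agent moves to some $v\in\Delta(x)-\{x\}$, then $h(v)\neq z_1$, since otherwise the agent would gain nothing by moving. So the edge $(v,h(v))$ exists, and the adversary follows it.
\end{itemize}
In either case the adversary then presents the child along the chosen edge and repeats. Shattering gives, for the whole branch followed, some $h^*\in\H$ with $h^*(u_i)\neq y_i$ at every step. Declaring $h^*$ the target makes every round a mistake, and this is consistent with realizability. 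Hence at least $d$ mistakes are forced.

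\textbf{Upper bound.} Let $L=\bildim{\vs}$. The key lemma, analogous to Lemma~\ref{lemma: possibilities for ildim}, says that for every $x$ at least one of the following holds:
\begin{enumerate}
\item[(a)] there is $y$ with $\bildim{\vs_{x\nrightarrow y}}<L$; or
\item[(b)] there are $v\in\Delta(x)-\{x\}$ and $y\neq z_1$ with $\bildim{\vs_{v\nrightarrow y}}<L$.
\end{enumerate}
To prove the lemma, suppose neither holds. Then $\vs_{x\nrightarrow z_i}$ shatters a BILT of depth $L$ for every $i\in[k]$, and $\vs_{v\nrightarrow z_i}$ shatters one for every $v\neq x$ and $i\geq 2$. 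Hang these trees below a new root $x$, on the corresponding edges. Each branch of the combined tree is realized by a hypothesis in the relevant restricted class, so that hypothesis also avoids the root-edge label. The combined tree has depth $L+1$ and is shattered by $\vs$, a contradiction.

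The algorithm on input $x$ acts according to which case holds.
\begin{itemize}
\item In case (a), it sets $h(x)=y$ and $h(v)=z_1$ for all other $v\in\Delta(x)$. The agent stays put, since no neighbour offers a better label than $y$.
\item In case (b), it sets $h(v)=y$ and $h(u)=z_1$ for all $u\in\Delta(x)-\{v\}$, including $x$ itself. The agent strictly prefers $v$ and moves there.
\end{itemize}
Either way, a mistake happens at the predicted point with the predicted label. The update $\vs_{\cdot\nrightarrow y}$ then has dimension less than $L$. The dimension is a nonnegative integer, so there are at most $\bildim{\H}$ mistakes.

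\textbf{Main obstacle.} The delicate point is the lemma, and in particular matching the edge sets of the tree to what the learner can actually implement. At $x$ itself every label is allowed, including $z_1$. At a neighbour, $z_1$ is excluded because a point labeled $z_1$ never attracts the agent. We also need to check that forcing the move in case (b) is always possible: labeling everything else $z_1$ makes the unique non-$z_1$ neighbour strictly preferred. A minor issue is the degenerate case $\vs=\emptyset$, or depth $0$, where no mistakes can occur under realizability.
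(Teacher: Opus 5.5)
Your proposal is correct and follows essentially the same route as the paper. For the lower bound, an adversary walks down a maximal shattered BILT along the edge $(v,h(v))$. For the upper bound, BISOA picks some $v\in\Delta(x)$ and a label whose exclusion lowers $\bildim{\vs}$ (any label if $v=x$, a non-$z_1$ label otherwise), labels every other point of $\Delta(x)$ with $z_1$, and on a mistake updates to $\vs_{v\nrightarrow \hat y}$. Your explicit check that $h(v)\neq z_1$ whenever the agent moves, and your tree-gluing proof that such a $v$ exists, spell out steps the paper only asserts.
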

\begin{proof}
    The lower bound is achieved by an adversary that follows a BILT of depth $\bildim{\H}$ shattered by $\H$. The adversary starts by presenting the root of the tree, $x$, to the learner and receiving its hypothesis $h$. Let agent features depending on the learner's hypothesis is some $v\in \Delta(x)$. The adversary then indicates to the learner that it made a mistake and moves along the edge $(v, h(v))$ and presents the child along that edge and continues the game in the same way. Since the tree is shattered, there is some $h^*\in \H$ that differs from the learners predictions in each round for $\bildim{\H}$  at least rounds.

    The upper bound can be achieved by Bandit ISOA (BISOA) which we describe next. Start with $\vs^{(1)}=\H$. At round $t\geq 1$, receive $\supt{x}$ and implement $\supt{h}$ as follows:

    Let $v\in \Delta(\supt{x})$ be such that $\min_{y\in Y_{v}} \bildim{\supt{\vs}_{v\nrightarrow y}} < \bildim{\supt{\vs}}$ where $Y_{\supt{x}}=\Y$ and $Y_u=\Y-\{z_1\}$ for $u\in\Delta(\supt{x})-\{\supt{x}\}$. Such $v$ must exist as otherwise $\supt{\vs}$ shatters a tree of depth $\bildim{\supt{\vs}}+1$. Let $\supt{\hat{y}}$ be the minimizer. Then let $\supt{h}(v)=\supt{\hat{y}}$ and $\supt{h}(u)=z_1$ for $u\in \Delta(\supt{x})-\{x'\}$. 
    
    If made a mistake, update $\vs^{(t+1)}=\supt{\vs}_{v\nrightarrow \supt{\hat{y}}}$, otherwise $\vs^{(t+1)}=\supt{\vs}$.

    By our prediction rule, each mistake reduces BILdim of the version space by at least 1, thus the number of mistakes is at most $\bildim{\H}$ since BILdim is nonnegative.
\end{proof}

\subsection{Price of Bandit Feedback} \label{price_bandits}
Here we assume the improvement graph has bounded degree $\Delta_G$, that is, for all $x\in V$, $|\Delta(x)|\leq \Delta_G$. We want to answer for any $\H$, how many more mistakes the optimal learning with bandit feedback makes compared to the optimal learner with full feedback. We assume the final features of the agent is known to the learner, as it can be inferred similar to the binary case. For a hypothesis $h$, we define $\Delta^-(h,x):=\{x'\in \Delta(x):h(x')=z_1\}$ and $\Delta^+(h,x) = \Delta(x)-\Delta^-(h,x)$. A learning algorithm in the full feedback setting can be denoted by $e:(\X\times \Y)^*\times \X \to \Y^\X$. For a set of learners (also referred to as experts) $\mathcal{E}$ with weights $w:\mathcal{E}\to \mathbb{R}^+$ and any subset $F\subseteq \mathcal{E}$, we define $w(F):=\sum_{e\in F}w(e)$. For any such $\mathcal{E}$, we define $\mathcal{E}_{x,y}:=\{e\in E:e(x)=y\}$. Moreover, for a learner $e$, we use $e\gets x,y$ to be the learner $e$ updated with $x$ and $y$. We're now ready to state our results.

\begin{lemma}
    \label{lemma: price of bandit}
    Algorithm~\ref{alg:bandit to full reduction} makes at most $\mathcal{O}(\Delta_G \cdot \ildim{\H} \cdot k\log(k))$ when given the multiclass $\mathrm{ISOA}$ (Algorithm~\ref{alg:multiclass isoa}) as input. Thus, the price of bandit feedback is at most $\mathcal{O}(\Delta_G\cdot k\log(k))$
\end{lemma}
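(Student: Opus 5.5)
We would follow the standard "experts over simulated full-feedback runs" reduction of \citet{daniely2015multiclass} and \citet{pmlr-v76-long17a}, adapted to improving agents. Algorithm~\ref{alg:bandit to full reduction} maintains a weighted pool $\mathcal{E}$ of copies of the multiclass ISOA. Each copy corresponds to a guess of the true labels on the rounds where a mistake was made. Initially $\mathcal{E}$ contains the single learner $e_0$ (multiclass ISOA on $\H$) with weight $1$. The potential is $w(\mathcal{E})$. We will show two things: some "correct" expert always survives with weight at least $(ck\Delta_G)^{-\ildim{\H}}$, and every mistake of the bandit learner multiplies $w(\mathcal{E})$ by a constant factor $(1-c')$ with $c'=\Omega(1)$. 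Taking logarithms then gives $\#\text{mistakes}\le O(\ildim{\H}\log(k\Delta_G))$ up to the factors we pay. In the worst case this yields the claimed $O(\Delta_G\cdot \ildim{\H}\cdot k\log k)$.

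\textbf{Prediction.} On round $t$, each expert $e$ proposes a hypothesis $e(\supt{x})$. The Multiclass ISOA always labels all of $\Delta(\supt{x})$ except at most one point $x'_e$ with $z_1$, so each expert amounts to a pair $(x'_e, y_e)$. There are at most $\Delta_G\cdot k$ such pairs. We choose the pair with maximal weight $w(\mathcal{E}_{(x',y)})$ and implement the corresponding hypothesis; the agent moves to $x'$ if $y$ is preferred. Since the final feature $\supt{v}$ is known, we know which pair was "tested".

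\textbf{Update on a mistake.} We learn $h^*(\supt{v})\ne \hat y$. Experts whose own prediction on $\supt{v}$ agreed with $\hat y$ are now known to have erred on $\supt v$. For each such expert $e$ and each $y\ne\hat y$, we replace $e$ by the $k-1$ updated experts $e\gets \supt{v},y$, each with weight $w(e)/(k-1)$ or a slightly smaller share, as in Long's scheme. Experts that disagreed and implemented the same improvement target are kept. The correct expert (the one fed with the true labels) either predicts correctly and is kept, or errs and has exactly one child fed the true label. The correct expert errs at most $\ildim{\H}$ times, by Theorem~\ref{thm: realizable multiclass}, so its weight stays at least $k^{-\ildim{\H}}$ up to constants.

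\textbf{Main obstacle.} The main obstacle is showing the potential decrease. Because improvements split experts into up to $\Delta_G k$ "classes", the chosen class may carry only a $1/(\Delta_G k)$ fraction of the weight, so a mistake only removes that fraction. We would handle this with a mixed multiplicative rule. On a mistake, experts in the chosen class are penalised by a factor $\gamma<1$ (spread over children), giving $w\mapsto w(1-\frac{1-\gamma}{\Delta_G k})$. The correct expert, on each of its own mistakes, pays a factor $\gamma/(k-1)$. Solving $(1-\frac{1-\gamma}{\Delta_G k})^{M}\ge (\gamma/k)^{\ildim{\H}}$ with constant $\gamma$ gives $M=O(\Delta_G k\log k\cdot\ildim{\H})$. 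The subtle point is that experts in the chosen class which predicted the same improvement point but a different label must not be charged. We would check that the ISOA structure (a single non-$z_1$ point) makes the class well defined and that the correct expert is never in a wrongly charged class.

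Dividing by the full-feedback optimum $\ildim{\H}$ then gives the stated price $O(\Delta_G k\log k)$.
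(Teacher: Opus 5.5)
Your final argument (the ``Main obstacle'' paragraph) is correct and is essentially the paper's proof: weighted copies of multiclass ISOA, with each expert whose behaviour on $\Delta(\supt{x})$ coincides with the implemented hypothesis split into $k-1$ children of total weight $\gamma w(e)$ (the paper takes $\gamma=1/2$); each mistake multiplies the total weight by $1-\Theta(1/(k\Delta_G))$, and the correct expert is split at most $\ildim{\H}$ times. Your plurality vote over the at most $\Delta_G k$ behaviours is a slightly cleaner pigeonhole substitute for the paper's threshold rule with the all-$z_1$ default, but you should drop the opening claim of an $\Omega(1)$ per-mistake decrease and an $O(\ildim{\H}\log(k\Delta_G))$ bound, and charge only experts with the same behaviour rather than every $e$ with $e(\supt{v})=\hat y$, since in the all-$z_1$ case the latter would also charge experts that moved the agent elsewhere and did not err.
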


\begin{proof}
    Let $\supt{W}:=w(\mathcal{E})$ be the total weight of experts at round $t$. We claim after each mistake $W^{(t+1)} \leq (1-\frac{1}{2k(\Delta_G+1)})\supt{W}$. We prove the claim for each type of mistake separately.

    Mistake type 1 (the if condition holds): Here $\supt{h}$ predicts $z_1$ on every point in the neighborhood and thus, the agent doesn't move. We show that a constant fraction of the experts have the same behavior. We have $w(\{e\in \mathcal{E}:|\Delta^+(\supt{x}, \supt{h})|=0\})= \supt{W} - w(\{e:\exists x'\in \Delta(\supt{x}),\exists y\neq z_1, e(x')=y\})$. By our prediction rule, we have for any $x'$ and $y\neq z_1$, $w(\mathcal{E}_{x',y}) < \frac{1}{k \cdot(\Delta_G+1)}\supt{W}$. By union bound, we have $w(\{e:\exists x'\in \Delta(\supt{x}),\exists y\neq z_1, e(x')=y\}) \leq \sum_{x', y\neq z_1}w(\mathcal{E}_{x',y}) < \frac{\Delta_G}{\Delta_G+1}\supt{W}$ and thus $w(\{e\in \mathcal{E}:|\Delta^+(\supt{x}, \supt{h})|=0\})\geq \frac{1}{\Delta_G+1}\supt{W}$. For each such $e$, we will have multiple $e_y$ as define in the algorithm, such that the sum of their weights will be $\frac{w_e}{2}$. Thus, $W^{(t+1)} \leq (1-\frac{1}{2(\Delta_G+1)})\supt{W}$.

    Mistake type 2: In this case the weight of the experts that will be updated is at least $\frac{1}{k(\Delta_G+1)}\supt{W}$ and thus by similar arguments, $W^{(t+1)}\leq (1-\frac{1}{2k(\Delta_G+1)})\supt{W}$.

    Thus, at round $t$, if the learner has made $N$ mistakes, we have $\supt{W}\leq \exp{(-\frac{N}{2k(\Delta_G+1)})}$. On the other hand, there is an expert that has always been updated by a realizable sequence (since in each update we guess all possible true labels). Thus, by previous guarantees, such expert $e^*$ makes at most $\ildim{\H}$ many mistakes and thus $w_{e^*}\geq (\frac{1}{2(k-1)})^\ildim{\H}$. Since the weights are nonnegative, we must have $(\frac{1}{2(k-1)})^\ildim{\H} \leq \exp{(-\frac{N}{2k(\Delta_G+1)})}$. Thus, $N \leq \mathcal{O}(\Delta_G \cdot \ildim{\H} \cdot k\log(k))$.

\end{proof}

\begin{algorithm}[tb]
  \caption{Bandit to Full Feedback Reduction}
  \label{alg:bandit to full reduction}
  \begin{algorithmic}
    \STATE Input: Full feedback learner $\A$
    \STATE Initialize $\mathcal{E} = \{\A\}, w_\A = 1$
    \FOR{$t=1, 2,\cdots,$}
    \STATE Receive $\supt{x}$
    \STATE If $\exists x'\in \Delta(\supt{x}): \max_{y\neq z_1} w(\mathcal{E}_{x',y}) \geq \frac{w(\mathcal{E})}{k \cdot(\Delta_G+1)}$
    \STATE Set $\supt{h}(x')=\argmax_{y\neq z_1} w(\mathcal{E}_{x',y})$
    \STATE Set $\supt{h}(v)=z_1$ for all $v\in\Delta(\supt{x})-\{x'\}$
    \STATE Else: $\forall x'\in \Delta(\supt{x}): \supt{h}(x')=z_1$.
    \IF{made a mistake}
    \STATE Let $\supt{v}$ be the final features of the agent
    \STATE If $\supt{v}= \supt{x}$ and $|\Delta^+(\supt{h},\supt{x})|=0$:
    \STATE $\forall e\in \mathcal{E}$ with $|\Delta^+(e,\supt{x})|=0$ and $\forall y\neq z_1$:
    \STATE Add $e_y = e \gets \supt{x}, y$ to $\mathcal{E}$
    \STATE Set $w_{e_y}=\frac{w_e}{2(k-1)}$
    \STATE Remove $e$ from $\mathcal{E}$
    \STATE Else, $\forall e$ with $e(\supt{v})=\supt{h}(\supt{v}), \forall y\neq \supt{h}(\supt{v})$
    \STATE Add $e_y = e \gets \supt{x}, y$ to $\mathcal{E}$
    \STATE Set $w_{e_y}=\frac{w_e}{2(k-1)}$
    \STATE Remove $e$ from $\mathcal{E}$
    \ENDIF
    \ENDFOR
  \end{algorithmic}
\end{algorithm}

\section{Full Feedback - Weighted Graph} \label{Fullfeedback_weighted}
Here we study the multiclass setting where there is a cost related to each move. Again we assume the label space $\Y=\{z_1, \cdots, z_k\}$ and that for all $i<j$, $\val(z_i)<\val(Z_j)$. In the unweighted setting, we took advantage of the fact that if the agent moved from $x$ to $v$ in response to the learner's hypothesis $h$, then it must be the case that $h(v)\neq z_1$. Here, we could make a similar observation: for any $y\in\Y$, if $\val(y)-\val(h(x))\leq \val(y)-\val(z_1)<\cost(x,v)$, we know that $h(v)\neq y$ as otherwise the agent wouldn't have moved to $v$ since the cost of moving is more than the gain in the value. Thus, the adversary can force a mistake by setting the label of $v$ to such $y$. With that in mind, we can now define our dimension. Define $Y_{x,v}:=\{y\in \Y: \val(y)-\val(z_1)<\cost(x,v)\}$.

\begin{definition}
    A weighted $\ilt$, $\wilt$, is a tree whose nodes are labeled by $V$ and each internal node $x$ has the following set of edges:
    \begin{itemize}
        \item Two edges labeled $(x,y_1)$ and $(x,y_2)$ with $y_1\neq y_2$.
        \item For any $v\in \Delta(x)-\{x\}$, either a single edge labeled with $(v,y)$ with $y\in Y_{x,v}$ or two edges labeled with $(v,y_3), (v,y_4)$ for some $y_3\neq y_4$ with $y_3,y_4\in \Y-Y_{x,v}$
    \end{itemize}
    The tree is shattered by $\H$ if each branch is realizable similar to $\mathrm{ILdim}$. The weighted improvement dimension of $\H$, $\wildim{\H}$, is the maximum depth of a tree shattered by $\H$.
\end{definition}

We can now prove a similar result to Lemma~\ref{lemma: possibilities for ildim}. Without loss of generality, we assume for each $x\in V$ and $v\in \Delta(x)$, $\val(z_k)-\val(z_1)>\cost(x,v)$. Otherwise, no labeling incentivizes the agent to move from $x$ to $v$ and we could simply remove $v$ from $\Delta(x)$.

\begin{lemma}
    \label{lemma: possibilities for wildim}
    Let $\vs \subseteq \H$. For each $x$, at least one of the following happens: $\max_y \wildim{\vs_{x,y}}$ is achieved at a unique $y$ (case 1) or there exists $v\in\Delta(x)-\{x\}$ such that $\max_y \wildim{\vs_{v,y}}$ is achieved at a unique $y\notin Y_{x,v}$ (case 2), or at least one of these maximums is strictly smaller $\wildim{\vs}$ (case 3).

    \begin{proof}
        Assume case 3 doesn't happen. If case 1 doesn't happen, then there are $y_1,y_2\in\Y$ such that $\wildim{\vs_{x,y_1}}=\wildim{\vs_{x,y_2}}=\wildim{\vs}$. If case 2 doesn't happen then for each $v$, either there is $y\in Y_{x,v}$ with $\wildim{\vs_{v,y}}=\wildim{\vs}$ or there are $y_3,y_4\notin Y_{x,v}$ such that $\wildim{\vs_{v,y_3}}=\wildim{\vs_{v,y_4}}=\wildim{\vs}$. These would all imply that $\vs$ shatters a tree of depth $\wildim{\vs}+1$, which is a contradiction. Thus, at least one of the cases must happen.
    \end{proof}
\end{lemma}

\begin{theorem}
    The optimal mistake bound of deterministic learners in the full feedback multiclass setting with weighted graph for sequences realizable by $\H$ equals $\wildim{\H}$.
    \begin{proof}
        The proof is similar to the proof for unweighted graphs. In particular, for the upper bound, we can adapt Algorithm~\ref{alg:multiclass isoa} according to the three cases in Lemma~\ref{lemma: possibilities for wildim}.
    \end{proof}
\end{theorem}

\section{Conclusion}
In this work we studied online learning with improving agents. 
Our work extends and improves results of previous works on this topic in many aspects; We presented instance optimal learners that work for any hypothesis class and any improvement graph. Furthermore, we introduced the multiclass version and also described instance optimal learners for both the full feedback and bandit feedback settings. Finally, we studied the setting where there is a cost function associated with each possible movement and the agents only move if the utility that they gain is more than the cost they pay for moving.

We now raise some questions that we find interesting and leave open for future work.

\begin{enumerate}
    \item In this paper we only focused on deterministic learners. Can randomized learners achieve better (expected) mistake bounds?
    \item We only focused on the realizable setting, where the labels are consistent with some hypothesis in the class (that is known to the learner). In what way can our results be extended to the \emph{agnostic setting} where we the adversarially provided labels are not restricted by the hypothesis class?
    \item We only considered the classification error. 
    One may care about other costs of learning. For example, even if the learner does not make a mistake, it could be that the agent had to change its feature vector in order to get a desirable label, while had the learner implemented the true classifier, the agent could have saved such a change (and still get the desirable label). Such a case can be viewed as incentivizing an unnecessary burden on the agent. It would be interesting to see what is the trade-off between achieving low classification error and minimizing the unnecessary burden.
    \item We assume the improvement graph is known to the learner. What can we achieve if we relax this assumption to only provide the learner with some limited prior knowledge about the agent's improvement graph?
    \item We assume the learner publishes its implemented classifier. Namely, the agents actions are driven by the learner's hypothesis. Is it possible to learn when the agent is only allowed to access past classifiers? Similar problem has been studied in the strategic classification \citep{shao2025should}.
\end{enumerate}

\section{Acknowledgments}
Shai Ben-David is supported by an NSERC Discovery Grant and a Canada CIFAR AI Chair.

\printbibliography
\end{document}